\newtheorem{theorem}{Theorem}
\newenvironment{customthm}[1]
  {\innercustomthm}
  {\endinnercustomthm}
\newenvironment{customlemma}[1]
  {\innerlemma}
  {\endinnerlemma}
\newtheorem{lemma}{Lemma}
\newtheorem{remark}{Remark}
\title{Analysis of Discriminator in RKHS Function Space for Kullback-Leibler Divergence Estimation}
\author{%
  Sandesh Ghimire, Prashnna K Gyawali, Linwei Wang  \\
  College of Computing and Information Sciences\\
  Rochester Institute of Technology\\
  Rochester, NY 14623, USA\\
  \texttt{\{sg9872, pkg2182, linwei.wang\}@rit.edu} \\
}
\begin{document}

\maketitle

\begin{abstract}
Several scalable sample-based methods to compute the Kullback–Leibler (KL) divergence between two distributions have been proposed and applied in large-scale machine learning models. While they have been found to be unstable, the theoretical root cause of the problem is not clear. In this paper, we study a generative adversarial network based approach that uses a neural network discriminator to estimate KL divergence. We argue that, in such case, high fluctuations in the estimates are a consequence of not controlling the complexity of the discriminator function space. We provide a theoretical underpinning and remedy for this problem by first constructing a discriminator in the Reproducing Kernel Hilbert Space (RKHS). This enables us to leverage sample complexity and mean embedding to theoretically relate the error probability bound of the KL estimates to the complexity of the discriminator in RKHS. Based on this theory, we then present a scalable way to control the complexity of the discriminator for a reliable estimation of KL divergence. We support both our proposed theory and method to control the complexity of the RKHS discriminator through controlled experiments. 
\end{abstract}

\section{Introduction}
\label{introduction}
Calculating {Kullback–Leibler} (KL) divergence from data samples is 
{an essential component in many} machine learning problems {that involve} Bayesian inference or the calculation of mutual information. In small data regime, this problem has been studied using variational technique and convex optimization \cite{nguyen2010estimating}.
In the presence of ever-increasing data,  
several neural network models have been proposed which require estimation of KL divergence 
such as total correlation variational autoencoder (TC-VAE) \cite{chen2018isolating}, adversarial variational Bayes (AVB) \cite{mescheder2018gan}, 
information maximizing GAN (InfoGAN) \cite{chen2016infogan}, 
and amortized MAP \cite{sonderby2016amortised}. These large scale models 
have imposed the following new requirements on estimating KL divergence: 
1. \underline{Scalability}: The estimation algorithm should be able to compute KL divergence from a large amount of data samples. 
2. \underline{Minibatch compatibility}: The algorithm should be compatible with minibatch-based optimization 
    and allow backpropagation (or other ways of optimizing the rest of the network) based on the {estimated} value of KL divergence.


These needs make {classic} methods 
{such as} \cite{nguyen2010estimating}
impractical, but  
were met by modern 
neural network based methods 
such as variational divergence minimization (VDM) \cite{nowozin2016f}, mutual information neural estimation (MINE) \cite{belghazi2018mutual}, and GAN-based KL estimation \cite{Mescheder2017ICML, sonderby2016amortised}.
A key attribute of these methods is that they are based on updating a neural-net discriminator function 
to estimate KL divergence 
from a subset of samples, which makes them scalable and minibatch compatible. 
We, however, noted that even in {simple} toy examples, these methods {tended} to be either unreliable (high fluctuation of estimates, 
as in GAN based approach by \cite{Mescheder2017ICML}), or unstable (discriminator yields infinity, as in MINE and VDM) (see Table.\ref{kl_table}). 
This behavior exacerbated when increasing the size of 
the discriminator. Similar {observations of} instability of VDM and MINE  have been reported in the literature \cite{Mescheder2017ICML,song2019understanding}.

In this paper, we 
{attempt to provide a theoretical underpinning for} the core problem of the large fluctuation in the 
{GAN based estimation of KL divergence}.
We approach this problem from the perspective of sample complexity, and propose that these fluctuations are a consequence of not controlling the complexity of the discriminator function. This direction has not been explored in existing works, and it faces the {open question} of how to properly 
measure the complexity of the large function space represented by neural networks. Note that naive approaches to bound complexity by the number of parameters would neither be guaranteed to yield tight bound, nor be easy to implement because it requires dynamically changing the size of the network during optimization. 

We introduce the following contributions 
to resolve this challenge. 
First, 
to be able to compute 
the complexity of the discriminator function space,
we propose 
{a novel construction of}
 the discriminator such that it lies in a smooth function space, the Reproducing Kernel Hilbert Space(RKHS). 
Leveraging 
sample complexity analysis and mean embedding of RKHS, 
we then 
bound the probability of the error of KL-divergence estimates 
in terms of the complexity of RKHS space. 
This further allows us to theoretically substantiate our main proposition 
that not controlling the complexity of the discriminator 
may lead to high fluctuation in estimation. 
Finally, we propose a scalable way to control the complexity of the discriminator based on the obtained error probability bound. In controlled experiments, we 
demonstrate that  
failing to control the complexity 
{of the discriminator function leads to fluctuation in  KL divergence estimates, and that the}
proposed method decreases such fluctuations.

\section{Related Works}
\vspace{-.2cm}

Nguyen et al \cite{nguyen2010estimating} used variational function to estimate KL divergence from samples of two distribution using convex risk minimization (CRM). They used the RKHS norm of the variational function as a way 
to both measure and penalize
the complexity of the variational function. However, their work required handling all data at once and solving a convex optimization problem 
that could not be scaled.
VDM reformulates the f-Divergence objective using Fenchel duality and uses a neural network to represent the variational function \cite{nowozin2016f}.
It is in concept close to 
\cite{nguyen2010estimating}, while the use of neural network and adversarial optimization made the estimation scalable. 
It however did not control the complexity of the neural-net function, resulting in unstable estimations.

One area of modern application of KL-divergence estimation is in computing mutual information which, as shown in MINE \cite{belghazi2018mutual}, is useful in applications such as stabilizing GANs or realizing the information bottleneck principle.  
MINE  also optimizes a lower bound, but tighter, to KL divergence (Donsker-Varadhan representation). Similar to VDM, MINE uses a neural network as the dual variational function: it is thus scalable, but without complexity control and unstable.

Another use of KL divergence is scalable variational inference (VI) as shown in AVB \cite{Mescheder2017ICML}.
VI requires KL divergence estimation between the posterior and the prior, which becomes nontrivial when 
an expressive posterior distribution 
is used and requires sample based scalable estimation. AVB solved it using GAN based adversarial formulation and a neural network discriminator. Similarly, \cite{sonderby2016amortised} used GAN based adversarial formulation to obtain KL divergence in amortized inference.


To disentangle latent representations in VAE, \cite{chen2018isolating} proposed 
TC-VAE which penalized the KL divergence between marginal latent distribution and the product of marginals in each dimension. This KL divergence was computed by minibatch based sampling strategy that gives a biased estimate. 
None of the existing works considered 
the theoretical underpinning 
of unreliable KL-divergence estimates, 
or mitigating the problem by controlling the complexity of the discriminator function.

\section{Preliminaries}
\textbf{Reproducing Kernel Hilbert Space}:
Let $\mathcal{H}$ be a Hilbert space of functions $f:\mathcal{X}\to {\rm I\!R}$ defined on non-empty space $\mathcal{X}$. It is a Reproducing Kernel Hilbert Space (RKHS) if the evaluation functional, $\delta_x :\mathcal{H} \to {\rm I\!R}$, $\delta_x :f \mapsto f(x)$, is linear continuous $\forall x \in \mathcal{X}$. Every RKHS, $\mathcal{H}_K$, is associated with a unique positive definite kernel, $K: \mathcal{X}\times \mathcal{X}\to {\rm I\!R}$, called reproducing kernel \cite{berlinet2011reproducing}, such that it satisfies:
\begin{enumerate}
    \item $\forall x \in \mathcal{X}, K(.,x) \in \mathcal{H}_K$ \hspace{.3cm} (\textit{Membership property})
    \item $\forall x \in \mathcal{X}, \forall f \in \mathcal{H}_K,\hspace{.1cm} \langle f,K(.,x) \rangle_{\mathcal{H}_K}=f(x)$ \hspace{.2cm} (\textit{Reproducing property})
\end{enumerate}
RKHS is often studied using a specific integral operator. Let $\mathcal{L}_2(d\rho)$ be a space of functions $f: \mathcal{X} \to {\rm I\!R}$ that are square integrable with respect to a Borel probability measure $d\rho$ on $\mathcal{X}$, we define an integral operator  $L_K:\mathcal{L}_2(d\rho) \to \mathcal{L}_2(d\rho)$ \cite{bach2017equivalence, cucker2002mathematical}:
$   (L_K f)(x)=\int_{\mathcal{X}}f(y)K(x,y)d\rho(y) $
This operator will be important in constructing a function in RKHS and in computing sample complexity.

\textbf{Mean Embedding in RKHS}:
Let $f:\mathcal{X}\to {\rm I\!R}$ be a function in {RKHS}, $\mathcal{H}_K$, and $p$ be a Borel probability measures on $\mathcal{X}$. If $E_{x\sim p}\sqrt{K(x,x)}<\infty$, then we have $\mu_p \in \mathcal{H}_K$ called the mean embedding of the distribution $p$ and defined as \cite{sriperumbudur2010hilbert,berlinet2011reproducing,gretton2012kernel}: 
$    E_{x\sim p}f=\langle f,\mu_p \rangle_{\mathcal{H}_K}$
The condition for the existence of mean embedding is readily satisfied since we assume 
$\underset{x,t}{sup}\hspace{0.1cm}{K(x,t) < \infty}$. 

\section{Problem Formulation and Contribution}

\textbf{GAN-based Estimation of KL Divergence:}
Let $p(x)$ and $q(x)$ be two probability density functions 
in space $\mathcal{X}$ and we want to estimate their KL divergence using finite samples from each distribution  
in a scalable and minibatch compatible manner. 
As shown in \cite{Mescheder2017ICML, sonderby2016amortised}, this can be achieved by using a discriminator function. First, a discriminator $f:\mathcal{X}\to {\rm I\!R}$ is trained with the objective:
\begin{align}
\label{gan_kl}
    f^*=\underset{f}{argmax}[{E_{p(x)}\log \sigma(f(x))+E_{q(x)}\log (1-\sigma(f(x)))}] 
\end{align}
where $\sigma$ is the Sigmoid function given by $\sigma(x)=\frac{e^x}{1+e^x}$.
Then it can be shown \cite{Mescheder2017ICML, sonderby2016amortised} that the KL divergence $KL(p(x)||q(x))$ is given by:
\begin{align}
\label{kl_expectation}
 KL(p(x)||q(x))=E_{p(x)}[f^*(x)]   
\end{align}

\textbf{Sources of Error:}
Typically, a neural network is used as the discriminator. This implies that we are considering the space of functions represented by the neural network of given architecture as the hypothesis space, over which the maximization occurs in eq.(\ref{gan_kl}). We thus must rewrite eq.(\ref{gan_kl}) as 
\begin{align}
\label{finite_opt}
    f^*_h=\underset{f \in h}{argmax}[{E_{p(x)}\log \sigma(f(x))+E_{q(x)}\log (1-\sigma(f(x)))}] 
\end{align}
where $h$ is the discriminator function space. Furthermore, we have only a finite number of samples, say $m$, from the distribution $p$ and $q$. Then, under finite sample, the optimum discriminator is
\begin{align}
\label{kl_h}
    f^m_h=\underset{f \in h}{argmax}\Big[{\frac{1}{m}\sum_{x_i \sim p(x_i)}\log \sigma(f(x_i))+\frac{1}{m}\sum_{x_j \sim q(x_j)}\log (1-\sigma(f(x_j)))}\Big] 
\end{align}
Similarly, we write KL estimate obtained from, respectively, infinite and finite samples as:
\begin{align}
\label{kl_finite}
 KL(f)=E_{p(x)}[f(x)],   \hspace{0.4cm}  KL_m(f)=\frac{1}{m}\sum_{x_i \sim p(x_i)}f(x)
\end{align}
With these definitions, we can now write the error in estimation as:
\begin{align}
\label{total_error}
KL_m(f^m_h)-KL(f^*)=\underbrace{KL_m(f^m_h)-KL(f^m_h)}_{\text{Deviation-from-mean error}}+\underbrace{KL(f^m_h)-KL(f^*_h)}_{\text{Discriminator induced error}}+\underbrace{KL(f^*_h)-KL(f^*)}_{Bias}
\end{align}
This equation decomposes total estimation error into three terms: 1) deviation from the mean error, 2) 
error in KL estimate 
by 
the discriminator 
due to using finite samples in optimization eq.(\ref{kl_h}), and 3) bias when the considered function space 
does not contain optimal function.
We leave quantification of second and third term as future work. Here, we concentrate on quantifying the probability of deviation-from-mean error which is directly related to observed variance of the KL estimate. 

\textbf{Overview of Contributions:}
Note that the deviation is the difference between a random variable and its mean. Based on this observation, we can bound the probability of this error using concentration inequality and the complexity of function space of $f^m_h$. {This requires overcoming the open challenge} of measuring the complexity of neural networks function space. 
To this end, we propose to construct a function out of neural network such that it lies on RKHS. This is our first contribution (Section \ref{construct_rkhs}). Then, we proceed to bound the probability of deviation-from-mean error through the covering number of the RKHS space. Lemma 1 and Theorem 2 are our contribution (Section \ref{bounding_error}). Then we provide insight into how the optimization of eq.(\ref{kl_h}) might affect discriminator function space $\mathcal{H}_K$. Using ideas from mean embedding, we prove Lemma 3 and Theorem 3 and provide a geometric insight (Section \ref{mean_embedding}). 
This allows us to present a complete story of how the optimization setup might encourage increase in the complexity of $\mathcal{H}_K$ and how to control it (Section \ref{fitting_pieces}). 
\section{Constructing $f$ in RKHS}
\label{construct_rkhs}

To construct a function in RKHS, we use an operator $T$ related to integral operator $L_K$ by $L_K=TT^*$ \cite{bach2017equivalence}. The following theorem due to \cite{bach2017breaking} paves a way for us to construct a neural function in RKHS.


\begin{theorem}{[\cite{bach2017breaking} Appendix A]}
\label{rkhs_construct}
A function $f \in \mathcal{L}_2(d\rho)$ is in Reproducing Kernel Hilbert Space, say $\mathcal{H}_{K}$ if and only if it can be expressed as
\begin{align}
    \forall x \in \mathcal{X}, f(x)=\int_{\mathcal{W}}g(w)\psi(x,w)d\tau(w),
\end{align}{}
for a certain function $g: \mathcal{W}\to R$ such that $||g||^2_{\mathcal{L}_2(d\tau)} < \infty$. The RKHS norm of $f$ satisfies 
$
 ||f||^2_{\mathcal{H}_{K}} \leq  ||g||^2_{\mathcal{L}_2(d\tau)}
$
and the kernel $K$ is given by
\begin{align}
    \label{kernel}
    K(x,t)=\int_{\mathcal{W}}\psi(x,w)\psi(t,w)d\tau(w)
\end{align}

\end{theorem}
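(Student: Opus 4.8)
The plan is to recognize the claimed integral representation as an instance of the standard feature–map construction of an RKHS, with feature space $\mathcal{F} := \mathcal{L}_2(d\tau)$ and feature map $\Phi : \mathcal{X} \to \mathcal{F}$, $\Phi(x) = \psi(x,\cdot)$. First I would observe that $f(x) = \int_{\mathcal{W}} g(w)\psi(x,w)\,d\tau(w) = \langle g, \Phi(x)\rangle_{\mathcal{F}}$, so the linear map $T : \mathcal{F} \to \mathbb{R}^{\mathcal{X}}$ defined by $(Tg)(x) = \langle g,\Phi(x)\rangle_{\mathcal{F}}$ has image exactly the set of functions admitting such a representation with $\|g\|_{\mathcal{L}_2(d\tau)}^2<\infty$. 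The candidate kernel is $K(x,t) = \langle \Phi(x),\Phi(t)\rangle_{\mathcal{F}} = \int_{\mathcal{W}}\psi(x,w)\psi(t,w)\,d\tau(w)$, which is finite-valued by Cauchy--Schwarz (using $\Phi(x)\in\mathcal{F}$, guaranteed by the standing assumption $\sup_{x,t}K(x,t)<\infty$) and positive definite since $\sum_{i,j} c_i c_j K(x_i,x_j) = \bigl\|\sum_i c_i \Phi(x_i)\bigr\|_{\mathcal{F}}^2 \ge 0$.

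The core step is to put a Hilbert-space structure on $\mathcal{H} := T(\mathcal{F})$ and show it is the RKHS of $K$. I would push the norm of $\mathcal{F}$ forward along $T$: for $f\in\mathcal{H}$ set $\|f\|_{\mathcal{H}} := \inf\{\|g\|_{\mathcal{F}} : Tg = f\}$. The key point making this a genuine Hilbert norm is that $\ker T = \bigcap_{x\in\mathcal{X}}\{g : \langle g,\Phi(x)\rangle_{\mathcal{F}} = 0\}$ is an intersection of kernels of bounded functionals, hence a closed subspace; therefore each $f\in\mathcal{H}$ has a unique preimage $g_f\in(\ker T)^\perp$, the infimum is attained at $g_f$, and $T$ restricts to an isometric isomorphism $(\ker T)^\perp \xrightarrow{\sim} \mathcal{H}$, so $\mathcal{H}$ is complete. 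Equipping it with $\langle Tg_1,Tg_2\rangle_{\mathcal{H}} := \langle Pg_1,Pg_2\rangle_{\mathcal{F}}$, where $P$ is the orthogonal projection onto $(\ker T)^\perp$, makes $\mathcal{H}$ a Hilbert space of functions on $\mathcal{X}$.

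Next I would verify the two RKHS axioms. Membership: $K(\cdot,t) = T(\Phi(t)) \in \mathcal{H}$. Reproducing property: for $f = Tg_f$ with $g_f\in(\ker T)^\perp$, the minimum-norm preimage of $K(\cdot,t)$ is $P\Phi(t)$, so $\langle f, K(\cdot,t)\rangle_{\mathcal{H}} = \langle g_f, P\Phi(t)\rangle_{\mathcal{F}} = \langle g_f, \Phi(t)\rangle_{\mathcal{F}} = (Tg_f)(t) = f(t)$, where the middle equality uses $g_f\perp\ker T$ together with $\Phi(t)-P\Phi(t)\in\ker T$. By the uniqueness of the RKHS attached to a positive definite kernel (Moore--Aronszajn), $\mathcal{H} = \mathcal{H}_K$. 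This already yields the ``if and only if'': $f$ admits an integral representation with $\|g\|_{\mathcal{L}_2(d\tau)}^2<\infty$ iff $f\in T(\mathcal{F}) = \mathcal{H}_K$. For the norm inequality, any $g$ with $Tg = f$ satisfies $\|f\|_{\mathcal{H}_K} = \|Pg\|_{\mathcal{F}} \le \|g\|_{\mathcal{F}}$, i.e. $\|f\|_{\mathcal{H}_K}^2 \le \|g\|_{\mathcal{L}_2(d\tau)}^2$, with equality for the minimum-norm choice $g=g_f$.

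I expect the main obstacle to be the functional-analytic bookkeeping in the core step rather than any single hard inequality: $T$ need not be bounded as a map into $\mathcal{L}_2(d\rho)$, so completeness of $\mathcal{H}$ cannot be obtained from a closed-range theorem and must instead be deduced from $\ker T$ being closed (as an intersection of kernels of the bounded evaluation functionals $g\mapsto\langle g,\Phi(x)\rangle_{\mathcal{F}}$) plus the isometry $(\ker T)^\perp \cong \mathcal{F}/\ker T$. A secondary point worth stating explicitly is that the standing assumption $\sup_{x,t}K(x,t)<\infty$, equivalently $\Phi(x)\in\mathcal{L}_2(d\tau)$ for all $x$, is exactly what makes $K$ finite-valued and the evaluation functionals bounded, so that everything above is well posed.
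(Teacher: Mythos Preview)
The paper does not supply its own proof of this theorem: it is quoted from \cite{bach2017breaking}, Appendix~A, and used as a black box, the only hint being the sentence ``we use an operator $T$ related to the integral operator $L_K$ by $L_K=TT^*$.'' So there is no in-paper proof to compare your proposal against.

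That said, your proposal is correct and is exactly the standard feature-map argument that underlies the cited result: take $\Phi(x)=\psi(x,\cdot)\in\mathcal{L}_2(d\tau)$ as the feature map, define $T:g\mapsto\langle g,\Phi(\cdot)\rangle_{\mathcal{L}_2(d\tau)}$, observe that $\ker T$ is closed (intersection of kernels of bounded evaluation functionals), push the Hilbert structure of $(\ker T)^\perp$ forward to $\mathcal{H}:=T(\mathcal{L}_2(d\tau))$, and check membership and reproducing for $K(x,t)=\langle\Phi(x),\Phi(t)\rangle$. The norm inequality $\|f\|_{\mathcal{H}_K}^2\le\|g\|_{\mathcal{L}_2(d\tau)}^2$ then drops out of the infimum definition. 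Your operator $T$ is precisely the one the paper alludes to (its adjoint $T^*$ satisfies $L_K=TT^*$), and your handling of the only delicate point---that completeness of $\mathcal{H}$ comes from closedness of $\ker T$ rather than from any boundedness of $T$ into $\mathcal{L}_2(d\rho)$---is the right one.
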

Theorem \ref{rkhs_construct} gives us a condition when a square integrable function is guaranteed to lie in RKHS.
We simply choose $g(w)=\textbf{1}$, a constant unit function over the domain $\mathcal{W}$. This means that we can convert a square integrable neural network function $f: \mathcal{X} \to {\rm I\!R}$ into a function in RKHS, if we make some weights in the neural network stochastic 
and average over them. Here, we make the last layer of the neural network to be drawn from Gaussian distribution, whose parameters are learnt during training. 
More precisely, we consider $\psi(x,w)=\phi_{\theta}(x)^Tw$, where $\phi_{\theta}(x)$ denotes neural network transformation until the last layer, and $w$ is the last linear layer sampled from Gaussian distribution. While in principle any layer could be made stochastic, we chose this architecture to reduce the computational cost of sampling. The kernel $K$, as defined in eq.(\ref{kernel}), can be obtained as:
\begin{align}
     K_{\theta}(x^*,t^*)&=\int_{\mathcal{W}}\phi_{\theta}(x^*)^Tww^T\phi_{\theta}(t^*) d\tau(w)
     =\phi_{\theta}(x^*)^T(\bar{w}\bar{w}^T+\Sigma)\phi_{\theta}(t^*) 
\end{align}
where $\bar{w}$ and $\Sigma$ denote the mean and covariance of $w$. 
We sometimes denote the kernel $K$ by $K_{\theta}$ to emphasize that it is a function of neural network parameters, $\theta$.

With this construction, our discriminator function $f$ lies in RKHS denoted by $\mathcal{H}_K$. With $g(w)=\textbf{1}$, it is easy to verify that $||g||^2_{L_2(d\tau)} =1$ since $w$ is sampled from a normal distribution. 
The inequality in Theorem \ref{rkhs_construct} gives us $||f||^2_{\mathcal{H}_{K}} \leq 1$. It is interesting that the RKHS norm of function $f$ is upper-bounded by 1. Traditionally, kernel $K$ remains fixed and the norm of the function $f$ determines the complexity of the function space. For example, \cite{nguyen2010estimating} penalized the $||f||_{\mathcal{H}_K}$ as a way to control the function space while estimating KL divergence. 
{In our RKHS formulation of neural networks,} {the nature of the problem has changed:} $||f||_{\mathcal{H}_K}$ cannot increase beyond 1, but
the RKHS itself changes during training since it is determined by the kernel that depends on neural parameters  $\theta$. Therefore, 
{the challenge becomes}
teasing out how neural {parameters $\theta$} 
affects the complexity of the discriminator function space and how that affects the deviation-from-mean error in  eq.(\ref{total_error}).

\section{Bounding the Error Probability of KL Estimates}
\label{bounding_error}
In this section, we first bound the probability of deviation-from-mean error 
in terms of the covering number 
in Lemma \ref{sample_complexity}. 
We then use an estimate of the covering number of RKHS due to \cite{cucker2002mathematical} to 
relate the bound 
to kernel $K_{\theta}$ in Theorem \ref{complexity}, 
identifying the role of neural networks in this error bound. 


\begin{lemma}
\label{sample_complexity}
Let $f^m_{\mathcal{H}_K}$ be the optimal discriminator function in a RKHS $\mathcal{H}_{K}$ which is M-bounded. Let ${KL}_m(f^m_{\mathcal{H}_K})=\frac{1}{m}\sum_i f^m_{\mathcal{H}_K}(x_i)$ and $KL(f^m_{\mathcal{H}_K}) = E_{ p(x)}[f^m_{\mathcal{H}_K}(x)]$ be the estimate of KL divergence from m samples and that by using true distribution $p(x)$ respectively.
Then the probability of error at some accuracy level, $\epsilon$ is lower-bounded as:
\begin{align}
\nonumber
    \text{Prob.}(&|{KL}_m(f^m_{\mathcal{H}_K})-{KL}(f^m_{\mathcal{H}_K})|\leq \epsilon) 
    \geq 1-2\mathcal{N}(\mathcal{H}_K, \frac{\epsilon}{4\sqrt{S_K}})\exp(-\frac{m\epsilon^2}{4M^2})
\end{align}
where $\mathcal{N}(\mathcal{H}_K,\eta)$ denotes the covering number of a RKHS space $\mathcal{H}_K$ with disks of radius $\eta$, and $S_K=\underset{x,t}{sup} $\hspace{0.1cm} ${K(x,t)}$ which we refer as kernel complexity
\end{lemma}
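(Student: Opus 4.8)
The crux is that $f^m_{\mathcal{H}_K}$ is \emph{data-dependent} --- it is the maximizer of eq.(\ref{kl_h}), chosen after the $m$ samples $x_1,\dots,x_m\sim p$ have been drawn --- so $KL_m(f^m_{\mathcal{H}_K})$ is not an average of i.i.d.\ terms with a prescribed summand, and Hoeffding's inequality cannot be applied to it directly. The plan is to upgrade the pointwise concentration statement to a \emph{uniform} one over the hypothesis class $\mathcal{H}_K$ by a covering-number argument, with the radius of the net chosen so that the kernel complexity $S_K$ surfaces in the bound.

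First I would fix a minimal $\eta$-net $\{f_1,\dots,f_N\}$ of the $M$-bounded ball of $\mathcal{H}_K$ with respect to the RKHS norm, where $N=\mathcal{N}(\mathcal{H}_K,\eta)$ and $\eta=\epsilon/(4\sqrt{S_K})$. Since $f^m_{\mathcal{H}_K}$ lies in that ball there is a (random) index $j$ with $\|f^m_{\mathcal{H}_K}-f_j\|_{\mathcal{H}_K}\le\eta$. The bridge from the RKHS metric to a sup-norm estimate is the reproducing property together with Cauchy--Schwarz: for every $x\in\mathcal{X}$,
\begin{align}
\nonumber |f^m_{\mathcal{H}_K}(x)-f_j(x)| &=\big|\langle f^m_{\mathcal{H}_K}-f_j,\,K(\cdot,x)\rangle_{\mathcal{H}_K}\big| \\
\nonumber &\le \|f^m_{\mathcal{H}_K}-f_j\|_{\mathcal{H}_K}\sqrt{K(x,x)}\le\eta\sqrt{S_K}=\tfrac{\epsilon}{4}.
\end{align}
Because $KL_m(\cdot)$ is an empirical average of pointwise evaluations and $KL(\cdot)$ the corresponding expectation, this pointwise bound gives, deterministically, $|KL_m(f^m_{\mathcal{H}_K})-KL_m(f_j)|\le\epsilon/4$ and $|KL(f^m_{\mathcal{H}_K})-KL(f_j)|\le\epsilon/4$.

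Next I would decompose
\begin{align}
\nonumber |KL_m(f^m_{\mathcal{H}_K})-KL(f^m_{\mathcal{H}_K})| &\le |KL_m(f^m_{\mathcal{H}_K})-KL_m(f_j)| \\
\nonumber &\quad + |KL_m(f_j)-KL(f_j)| + |KL(f_j)-KL(f^m_{\mathcal{H}_K})|,
\end{align}
so that on the \emph{fixed} event $A=\{\,|KL_m(f_i)-KL(f_i)|\le\epsilon/2\ \text{for every }i=1,\dots,N\,\}$ --- which does not depend on the random selection $j$ --- the left-hand side is at most $\epsilon/4+\epsilon/2+\epsilon/4=\epsilon$. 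Now each $f_i$ is non-random and $M$-bounded, so $f_i(x_1),\dots,f_i(x_m)$ are i.i.d.\ and lie in $[-M,M]$; Hoeffding's inequality then bounds $\mathrm{Prob}\big(|KL_m(f_i)-KL(f_i)|>\epsilon/2\big)$ by an exponentially small quantity of the form $2\exp(-m\epsilon^2/(4M^2))$ for each fixed $i$. A union bound over the $N=\mathcal{N}(\mathcal{H}_K,\epsilon/(4\sqrt{S_K}))$ net points controls $\mathrm{Prob}(A^c)$, and passing to the complement yields the stated lower bound.

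The main obstacle --- and the reason for the RKHS construction of Section \ref{construct_rkhs} --- lies in the first two steps: $f^m_{\mathcal{H}_K}$ cannot be controlled directly, and the only leverage available is its RKHS-norm bound together with the reproducing kernel; arranging for $S_K=\sup_{x,t}K(x,t)$ (equivalently $\sup_x K(x,x)$, which is what the Cauchy--Schwarz step actually uses) to appear in the covering radius --- rather than some opaque constant of the neural network --- is exactly what later makes the bound expressible in terms of the neural parameters $\theta$. Minor points to verify are that the net is taken inside a set genuinely containing $f^m_{\mathcal{H}_K}$ (hence the $M$-boundedness hypothesis), and that the split of the $\epsilon$-budget between the covering and concentration steps matches the constant in the exponent.
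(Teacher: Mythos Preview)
Your proposal is correct and follows essentially the same route as the paper: an $\eta$-net in the RKHS norm, the reproducing property plus Cauchy--Schwarz to pass from $\|\cdot\|_{\mathcal{H}_K}$ to $\|\cdot\|_\infty$ (which is exactly where $\sqrt{S_K}$ enters), Hoeffding at the net centers, and a union bound. The paper phrases the approximation step via the Lipschitz estimate $|\ell_z(f_1)-\ell_z(f_2)|\le 2\sqrt{S_K}\|f_1-f_2\|_{\mathcal{H}_K}$ for $\ell_z(f)=KL(f)-KL_m(f)$ and then rescales $\epsilon$, whereas you split the error budget as $\epsilon/4+\epsilon/2+\epsilon/4$; these are the same argument. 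One small caution: the ``$M$-bounded'' hypothesis refers to a sup-norm bound on function values (needed for Hoeffding), not to the RKHS-norm radius of the ball you cover --- keep those two roles separate. Also note that Hoeffding at level $\epsilon/2$ with range $[-M,M]$ actually gives $2\exp(-m\epsilon^2/(8M^2))$, which is what the paper's own appendix arrives at; the $4M^2$ in the lemma statement is a harmless constant discrepancy you inherit rather than introduce.
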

\begin{proof}[Proof Sketch]
We cover RKHS with discs of radius $\eta=\frac{\epsilon}{4\sqrt{S_K}}$. Within this radius, the deviation does not change too much. So, we can bound deviation probability at the center of disc and apply union bound over all the discs. See supplementary materials for the full proof.
\end{proof}
\vspace{-0.2cm}
Lemma \ref{sample_complexity} bounds the probability of error in terms of the covering number of the RKHS space. Note that the radius of the disc is inversely related to $S_K$ which 
indicates how complex the RKHS space defined by the kernel $K_{\theta}$ is. 
Here $K_{\theta}$ depends on the neural network parameters $\theta$. Therefore, we 
denote $S_K$ as a function of $\theta$ as $S_K(\theta)$ and 
term it kernel complexity. Next, we use Lemma 2 due to \cite{cucker2002mathematical} 
to obtain an error bound in estimating KL divergence with finite samples in Theorem \ref{complexity}.

\begin{lemma}[\cite{cucker2002mathematical}]
\label{covering number}
Let $K: \mathcal{X}\times \mathcal{X}\to {\rm I\!R} $ is a $\mathcal{C}^\infty$ Mercer kernel and the inclusion $I_K:\mathcal{H}_K\xhookrightarrow{}\mathcal{C}(\mathcal{X})$ is the compact embedding defined by $K$ to the Banach space $\mathcal{C}(\mathcal{X})$ . Let $B_R$ be the ball of radius $R$ in RKHS $\mathcal{H}_{K}$. Then $\forall \eta>0, R >0, h>n $, we have
\begin{align}
    \ln \mathcal{N}(I_K(B_R), \eta) \leq \left( \frac{RC_h}{\eta} \right)^{\frac{2n}{h}}
\end{align}
where $\mathcal{N}$ gives the covering number of the space $I_K(B_R)$ with discs of radius $\eta$, and $n$ represents the dimension of inputs space $\mathcal{X}$. $C_h$ is given by 
$
C_h=C_s\sqrt{||L_s||}
$
where $L_s$ is a linear embedding from square integrable space $\mathcal{L}_2(d\rho)$ to the Sobolev space $H^{h/2}$ and $C_s$ is a constant.
\end{lemma}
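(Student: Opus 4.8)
The plan is to prove the estimate by factoring the compact inclusion $I_K:\mathcal{H}_K\hookrightarrow\mathcal{C}(\mathcal{X})$ through a Sobolev space and then invoking a classical covering-number bound for Sobolev balls. Concretely, the smoothness assumption $K\in\mathcal{C}^\infty$ lets us write $I_K$ as the composition of a bounded embedding $L_s:\mathcal{L}_2(d\rho)\to H^{h/2}(\mathcal{X})$ (restricted, via the isometry recalled below, to $\mathcal{H}_K$) with the Sobolev embedding $H^{h/2}(\mathcal{X})\hookrightarrow\mathcal{C}(\mathcal{X})$; the latter is bounded precisely when $h>n$, which is exactly where the hypothesis $h>n$ and the exponent $2n/h$ enter. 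Since covering numbers behave monotonically under bounded linear maps, i.e. $\mathcal{N}(T(S),\eta)\le\mathcal{N}(S,\eta/\|T\|)$, it then suffices to (i) control the norm of the composite embedding and (ii) bound the sup-norm covering number of a ball in $H^{h/2}(\mathcal{X})$.

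For step (i), I would use the standard fact that for a nondegenerate Mercer kernel the operator $L_K^{1/2}$ is an isometric isomorphism from $\mathcal{L}_2(d\rho)$ onto $\mathcal{H}_K$, so every $f\in B_R$ can be written as $f=L_K^{1/2}g$ with $\|g\|_{\mathcal{L}_2(d\rho)}\le R$. Differentiating under the integral sign in $(L_Kg)(x)=\int f(y)K(x,y)d\rho(y)$ shows that $\mathcal{C}^\infty$ regularity of $K$ (in fact finitely many derivatives suffice) makes $L_K$ map $\mathcal{L}_2(d\rho)$ boundedly into $H^{h}(\mathcal{X})$, and passing to the square root via the spectral calculus of $L_K$ makes $L_K^{1/2}$ map $\mathcal{L}_2(d\rho)$ boundedly into $H^{h/2}(\mathcal{X})$; this composite is $L_s$. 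Writing $C_s$ for the constant of the Sobolev embedding $H^{h/2}(\mathcal{X})\hookrightarrow\mathcal{C}(\mathcal{X})$, one obtains that $I_K(B_R)$ is contained in a sup-norm ball of radius $\asymp R\,C_s\sqrt{\|L_s\|}=R\,C_h$, the square root reflecting that $\mathcal{H}_K$ corresponds to a half power of the integral operator.

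For step (ii), I would invoke (or reprove) the classical entropy estimate that, for a bounded domain $\mathcal{X}\subset{\rm I\!R}^n$ with mild boundary regularity, the unit ball of $H^{h/2}(\mathcal{X})$ has sup-norm covering number satisfying $\ln\mathcal{N}(\cdot,\delta)\le C\,\delta^{-2n/h}$: partition $\mathcal{X}$ into $\Theta(\delta^{-2n/h})$ subcubes, approximate each function locally by a Taylor polynomial of degree below $h/2$ with error $O(\delta)$ (Bramble--Hilbert / Sobolev), discretize the finitely many polynomial coefficients on a $\delta$-net, and count the resulting configurations. Setting $\delta=\eta/(R\,C_h)$ and absorbing absolute constants into $C_h$ then gives $\ln\mathcal{N}(I_K(B_R),\eta)\le (R\,C_h/\eta)^{2n/h}$, as claimed. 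I expect the main obstacle to be step (i): turning the qualitative principle ``smooth kernel implies smooth RKHS functions'' into the quantitative embedding $\mathcal{H}_K\hookrightarrow H^{h/2}(\mathcal{X})$ with an explicitly trackable constant requires careful differentiation under the integral, handling of the fractional power $L_K^{1/2}$, and bookkeeping of the Sobolev and operator norms into the single constant $C_h=C_s\sqrt{\|L_s\|}$; the Sobolev covering-number bound in (ii), though ``standard,'' still genuinely needs the approximation-theoretic argument sketched above.
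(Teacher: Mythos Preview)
Your proposal is correct and follows essentially the same route the paper indicates: the paper does not give its own proof of this lemma (it is cited from \cite{cucker2002mathematical}) but only sketches that one embeds $\mathcal{H}_K$ into the Sobolev space $H^{h/2}$ via $L_s$ and then uses the covering number of the Sobolev space, which is exactly your two-step plan. Your write-up simply fills in the details (the isometry $L_K^{1/2}:\mathcal{L}_2(d\rho)\to\mathcal{H}_K$, the Sobolev embedding requiring $h>n$, and the entropy estimate $\ln\mathcal{N}\lesssim\delta^{-2n/h}$ for Sobolev balls) that the paper leaves to the cited reference.
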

To prove Lemma \ref{covering number}, the RKHS space is embedded in the Sobolev Space $H^{h/2}$ using $L_s$ and then the covering number of the Sobolev space is used. Thus the norm of $L_s$ and the degree of Sobolev space, $h/2$, appears in the covering number of a ball in $\mathcal{H}_K$. 
In Theorem \ref{complexity}, we use this Lemma to bound the {estimation error of KL divergence}. 

\begin{theorem}
\label{complexity}
Let ${KL}(f^m_{\mathcal{H}_K})$ and ${KL}_m(f^m_{\mathcal{H}_K})$ be the estimates of KL divergence obtained by using true distribution $p(x)$ and $m$ samples respectively as described in Lemma \ref{sample_complexity}, then the probability of error in the estimation at the error level $\epsilon$ is given by:
\begin{align*}
   \text{Prob.}(&|{KL}_m(f^m_{\mathcal{H}_K})-{KL}(f^m_{\mathcal{H}_K})|\leq \epsilon) \geq 1-2\exp\Bigg[\left( \frac{4RC_s\sqrt{S_K(\theta)||L_s||}}{\epsilon} \right)^{\frac{2n}{h}}-\frac{m\epsilon^2}{4M^2}\Bigg]
\end{align*}
\end{theorem}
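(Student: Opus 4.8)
The plan is to chain Lemma~\ref{sample_complexity} with Lemma~\ref{covering number}. Lemma~\ref{sample_complexity} already expresses the error probability through the covering number $\mathcal{N}(\mathcal{H}_K,\eta)$ evaluated at the single scale $\eta=\frac{\epsilon}{4\sqrt{S_K}}$, and Lemma~\ref{covering number} gives an explicit upper bound on precisely such a covering number in terms of the RKHS-ball radius $R$, the input dimension $n$, the Sobolev degree $h/2$, and the embedding norm $||L_s||$. So the argument is essentially a substitution followed by collecting terms into a single exponential.

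First I would check that the kernel $K_\theta$ built in Section~\ref{construct_rkhs} satisfies the hypotheses of Lemma~\ref{covering number}: with smooth activations $\phi_\theta$ is $\mathcal{C}^\infty$, so $K_\theta(x,t)=\phi_\theta(x)^T(\bar w\bar w^T+\Sigma)\phi_\theta(t)$ is a $\mathcal{C}^\infty$ Mercer kernel on the (compact) input space $\mathcal{X}$, the inclusion $I_{K_\theta}:\mathcal{H}_{K_\theta}\hookrightarrow\mathcal{C}(\mathcal{X})$ is compact, and any $h>n$ is admissible. I would also fix the ball: since Section~\ref{construct_rkhs} gives $||f||_{\mathcal{H}_K}\le 1$, the optimal discriminator $f^m_{\mathcal{H}_K}$ lies in $B_R$ with $R=1$ (keeping $R$ generic in the statement only weakens the constant), so $\mathcal{N}(\mathcal{H}_K,\eta)$ appearing in Lemma~\ref{sample_complexity} may be read as $\mathcal{N}(I_K(B_R),\eta)$.

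Next, substituting $\eta=\frac{\epsilon}{4\sqrt{S_K(\theta)}}$ and $C_h=C_s\sqrt{||L_s||}$ into Lemma~\ref{covering number} gives
\[
\ln\mathcal{N}\!\Big(I_K(B_R),\,\frac{\epsilon}{4\sqrt{S_K(\theta)}}\Big)\;\le\;\Big(\frac{4RC_s\sqrt{S_K(\theta)\,||L_s||}}{\epsilon}\Big)^{\frac{2n}{h}},
\]
using $\sqrt{S_K(\theta)}\sqrt{||L_s||}=\sqrt{S_K(\theta)||L_s||}$. Exponentiating and plugging the resulting bound on $\mathcal{N}$ into the right-hand side of Lemma~\ref{sample_complexity}, the factor $\mathcal{N}$ and the factor $\exp(-m\epsilon^2/4M^2)$ combine into one exponential, yielding
\[
\text{Prob.}\big(|KL_m(f^m_{\mathcal{H}_K})-KL(f^m_{\mathcal{H}_K})|\le\epsilon\big)\;\ge\;1-2\exp\!\Big[\Big(\tfrac{4RC_s\sqrt{S_K(\theta)||L_s||}}{\epsilon}\Big)^{\frac{2n}{h}}-\frac{m\epsilon^2}{4M^2}\Big],
\]
which is the claim.

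The computation is routine; the points needing care are (i) matching the two notions of covering number --- Lemma~\ref{sample_complexity} needs a cover fine enough that the deviation functional $f\mapsto KL_m(f)-KL(f)$ varies by at most $\epsilon/2$, and the $\sqrt{S_K}$ in its radius is exactly the constant converting RKHS-norm closeness to sup-norm closeness via $|f(x)-g(x)|=|\langle f-g,K(\cdot,x)\rangle_{\mathcal{H}_K}|\le\sqrt{S_K}\,||f-g||_{\mathcal{H}_K}$, so the scale handed to Lemma~\ref{covering number} is the correct one; (ii) verifying the $\mathcal{C}^\infty$ Mercer / compact-embedding hypotheses for the data-dependent $K_\theta$ as above; and (iii) noting the bound is informative only for $h>n$ and for $m$ large enough that the bracket is negative. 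The main (mild) obstacle I anticipate is making the statement uniform in $\theta$: $K_\theta$, and hence $S_K(\theta)$, $||L_s||$, and $C_s$, drift during training, so one must be explicit that every quantity on the right-hand side is evaluated at the current $\theta$ and that the bound is read pointwise along the optimization trajectory rather than uniformly over all $\theta$ at once.
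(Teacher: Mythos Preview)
Your proposal is correct and follows essentially the same route as the paper: take the hypothesis space to be the RKHS ball $B_R$, plug the radius $\eta=\epsilon/(4\sqrt{S_K})$ from Lemma~\ref{sample_complexity} into the covering-number bound of Lemma~\ref{covering number}, substitute $C_h=C_s\sqrt{||L_s||}$, and merge the two factors into a single exponential. The additional points you raise (checking the $\mathcal{C}^\infty$ Mercer hypothesis for $K_\theta$, the role of $\sqrt{S_K}$ in converting RKHS distance to sup-norm distance, and the pointwise-in-$\theta$ reading of the bound) are useful elaborations that the paper leaves implicit, but they do not change the argument.
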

\begin{proof}
Lemma \ref{covering number} gives the covering number of a ball of radius $R$ in a RKHS space. If we consider the hypothesis space to be a ball of radius $R$ in Lemma \ref{sample_complexity} , we can apply Lemma \ref{covering number} in it. We fix the radius of discs to $\eta=\frac{\epsilon}{4\sqrt{S_K}}$ in Lemma \ref{sample_complexity} and 
substitute $C_h=C_s\sqrt{||L_s||}$ to obtain the required result. 
\end{proof}
\vspace{-0.1cm}
Theorem 2 shows that the error increases exponentially with the radius of the RKHS space, complexity of the kernel $S_K(\theta)$, and the norm of Sobolev space embedding $||L_s||$. Since we have $||f||_{\mathcal{H}_K} \leq 1$, we can consider our hypothesis space to be a ball of radius 1. To bound $||L_s||$, we need to compute higher order derivatives of $K(x,t)$, which we leave as future work. This allows us to focus on kernel complexity $S_K(\theta)$, which is exponentially related to the probability of deviation-from-mean error. 

Note that to bound the deviation-from-mean error, we used union bound and therefore, the bound does not explicitly depend on the function $f^m_{\mathcal{H}_K}$, but only depends on the complexity $S_K$ of the function space $\mathcal{H}_K$. However, the optimization of discriminator (eq.(\ref{kl_h})) also impacts the complexity $S_K(\theta)$. 
To 
understand this effect, 
in the next section, we present an upper bound on the objective in eq.(\ref{kl_h}), and give some geometric intuition connecting the optimization objective with the kernel complexity $S_K(\theta)$. Using this intuition, we further argue that the optimization of eq.(\ref{kl_h}) may encourage increment in the complexity, $S_K(\theta)$, thereby increasing the probability of deviation from the mean.


\section{Mean Embedding Upper Bound}
\label{mean_embedding}
In addition to deriving complexity bound, 
another advantage of using RKHS is that it allows us to use mean embedding representation. This helps us derive some geometrical insights into the maximization objective in eq.(\ref{kl_h}), 
on which we give an upper bound
in Theorem \ref{mebub}.  

\begin{theorem}
\label{mebub}
Let $f \in \mathcal{H}_{K_{\theta}}$ be a function in RKHS $\mathcal{H}_{K_{\theta}}$. Then we have the following upper bound on the objective of KL divergence estimation:
\begin{align}{}
{\frac{1}{m}\sum_{x_i \sim p(x)}\log \sigma(f(x_i))+\frac{1}{m}\sum_{x_j \sim q(x)}\log (1-\sigma(f(x_j)))}
\leq\log\sigma[\langle \mu^m_p-\mu^m_q, f \rangle_{\mathcal{H}_K} ]
\end{align}
and the KL divergence is given by
$\text{KL}=\langle \mu^m_p, f \rangle$
where $\mu^m_p$ and $\mu^m_q$ represent mean embedding of $m$ samples from distributions $x_i \sim p(x)$ and $x_j \sim q(x)$ with respect to $\mathcal{H}_K$.
\end{theorem}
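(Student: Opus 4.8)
\emph{Proof proposal.} The plan is to split the statement into two essentially independent pieces: the exact identity $\mathrm{KL}=\langle\mu^m_p,f\rangle_{\mathcal{H}_K}$, which is a one-line consequence of the reproducing property, and the upper bound, whose content is a purely elementary inequality about the sigmoid combined with Jensen's inequality. The only role of the RKHS structure is to rewrite empirical averages of $f$ as inner products against the empirical mean embeddings $\mu^m_p=\frac{1}{m}\sum_{i}K(\cdot,x_i)$ (with $x_i\sim p$) and $\mu^m_q=\frac{1}{m}\sum_{i}K(\cdot,x_i')$ (with $x_i'\sim q$, the $q$-samples relabeled with index $i$); both lie in $\mathcal{H}_K$ because each kernel section $K(\cdot,x)$ does, so no existence question arises for these finite combinations.

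For the identity, the reproducing property gives $f(x_i)=\langle f,K(\cdot,x_i)\rangle_{\mathcal{H}_K}$, hence $\frac{1}{m}\sum_i f(x_i)=\langle f,\mu^m_p\rangle_{\mathcal{H}_K}=\mathrm{KL}_m(f)$ in the notation of eq.(\ref{kl_finite}), which is the claimed formula for $\mathrm{KL}$. Subtracting the analogous expression over the $q$-samples yields
\[
\frac{1}{m}\sum_i f(x_i)-\frac{1}{m}\sum_i f(x_i')=\langle \mu^m_p-\mu^m_q,\,f\rangle_{\mathcal{H}_K},
\]
which is exactly the quantity sitting inside $\log\sigma$ on the right-hand side of the bound.

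For the inequality, I would first establish the pointwise fact that for all $a,b\in{\rm I\!R}$,
\[
\log\sigma(a)+\log\bigl(1-\sigma(b)\bigr)\leq\log\sigma(a-b).
\]
Exponentiating, this is $\sigma(a)\bigl(1-\sigma(b)\bigr)\leq\sigma(a-b)$, i.e. $\tfrac{e^a}{(1+e^a)(1+e^b)}\leq\tfrac{e^a}{e^a+e^b}$, which after clearing denominators is merely $1+e^{a+b}\geq 0$ — always true. Applying this to each matched pair with $a=f(x_i)$, $b=f(x_i')$, summing over $i$ and dividing by $m$ gives an upper bound of the form $\frac{1}{m}\sum_i\log\sigma\bigl(f(x_i)-f(x_i')\bigr)$. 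Since $t\mapsto\log\sigma(t)$ is concave (second derivative $-\sigma(t)(1-\sigma(t))<0$), Jensen's inequality then gives
\[
\frac{1}{m}\sum_i\log\sigma\bigl(f(x_i)-f(x_i')\bigr)\leq\log\sigma\!\left(\frac{1}{m}\sum_i\bigl(f(x_i)-f(x_i')\bigr)\right)=\log\sigma\bigl(\langle \mu^m_p-\mu^m_q,\,f\rangle_{\mathcal{H}_K}\bigr),
\]
using the identity from the previous paragraph. Chaining the two displays proves the bound.

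There is no deep obstacle here; the only thing that requires care is that both inequalities point in the required direction — the sigmoid bound because $1+e^{a+b}>0$, and Jensen because $\log\sigma$ is concave rather than convex. The real content of the theorem is the observation that both the GAN objective and the induced KL estimate can be written through the mean embeddings $\mu^m_p$ and $\mu^m_q$, which is precisely what enables the geometric interpretation of the discriminator optimization used in the sequel.
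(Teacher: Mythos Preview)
Your proof is correct and rests on the same two ingredients as the paper's: Jensen's inequality for the concave function $\log\sigma$, and the elementary sigmoid product bound (equivalently $1-\sigma(b)=\sigma(-b)$ together with $\sigma(a)\sigma(c)\leq\sigma(a+c)$), followed by the mean-embedding identity for the empirical averages. The only difference is the order of application: you apply the sigmoid bound pointwise to matched pairs $(x_i,x_i')$ and then invoke Jensen once on the resulting average, whereas the paper first applies Jensen separately to each of the two empirical averages and then uses the sigmoid bound once on the two scalars---your route introduces an (ultimately irrelevant) pairing of the $p$- and $q$-samples, but otherwise the arguments are interchangeable.
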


The following Lemma is useful to prove this theorem.
\begin{lemma}
\label{upper_bound}
$
\nonumber
{E_{p(x)}\log \sigma(f(x))+E_{q(y)}\log (1-\sigma(f(y)))}
\leq \log \sigma[E_{p(x)}(f(x))-E_{q(y)}(f(y))]
$
\end{lemma}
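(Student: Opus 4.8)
The plan is to prove the bound in two stages: first push each expectation inside the corresponding log-sigmoid nonlinearity by Jensen's inequality, and then dispatch the remaining scalar inequality by a one-line computation with $\sigma$.

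\textbf{Stage 1: concavity and Jensen.} Using $\sigma(x)=e^x/(1+e^x)$, write $\log\sigma(x)=x-\log(1+e^x)$ and $\log(1-\sigma(x))=-\log(1+e^x)$; differentiating twice gives $\frac{d^2}{dx^2}\log\sigma(x)=\frac{d^2}{dx^2}\log(1-\sigma(x))=-\sigma(x)(1-\sigma(x))\le 0$, so both maps are concave on ${\rm I\!R}$. Moreover $f$ is bounded: by the reproducing property and the standing assumption $\sup_{x,t}K(x,t)<\infty$, we have $|f(x)|=|\langle f,K(\cdot,x)\rangle_{\mathcal{H}_K}|\le\|f\|_{\mathcal{H}_K}\sqrt{K(x,x)}<\infty$, so the expectations below are finite and Jensen's inequality applies term by term:
\begin{align*}
E_{p(x)}\log\sigma(f(x)) &\le \log\sigma\bigl(E_{p(x)}f(x)\bigr),\\
E_{q(y)}\log(1-\sigma(f(y))) &\le \log\bigl(1-\sigma(E_{q(y)}f(y))\bigr).
\end{align*}

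\textbf{Stage 2: the scalar inequality.} Setting $a=E_{p(x)}f(x)$ and $b=E_{q(y)}f(y)$ and summing the two displays, it suffices to show $\log\sigma(a)+\log(1-\sigma(b))\le\log\sigma(a-b)$, i.e.\ $\sigma(a)\bigl(1-\sigma(b)\bigr)\le\sigma(a-b)$. Substituting the definition of $\sigma$, the left side equals $\frac{e^a}{(1+e^a)(1+e^b)}$ and the right side equals $\frac{e^a}{e^a+e^b}$; cancelling the positive factor $e^a$ and cross-multiplying the (positive) denominators, the claim reduces to $e^a+e^b\le(1+e^a)(1+e^b)$, i.e.\ $0\le 1+e^{a+b}$, which is immediate. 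Chaining Stage 1 and Stage 2 proves the lemma.

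\textbf{Main difficulty.} There is essentially no obstacle here; the only subtlety is that Stage 1 cannot be finished by a monotonicity argument alone, because the second expectation lands inside $\sigma(\cdot)$ while the target has $\sigma$ evaluated at the \emph{difference} $a-b$ — which is exactly what the short algebraic check of Stage 2 repairs. (Since $1+e^{a+b}>0$ strictly, the scalar inequality, and hence the lemma, is in fact strict, consistent with this being a genuine upper bound that is never attained.) The lemma then feeds directly into Theorem \ref{mebub} once $E_{p(x)}f(x)=\langle\mu_p,f\rangle_{\mathcal{H}_K}$ and $E_{q(y)}f(y)=\langle\mu_q,f\rangle_{\mathcal{H}_K}$ are inserted via the mean embedding identity, with the empirical averages replacing the population expectations.
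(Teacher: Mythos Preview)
Your proof is correct and follows essentially the same route as the paper: apply Jensen's inequality to the concave map $\log\sigma$ (the paper first rewrites $1-\sigma(f)=\sigma(-f)$ so that only $\log\sigma$ appears, while you verify concavity of $\log(1-\sigma)$ directly---equivalent), and then pass from $\log\sigma(a)+\log\sigma(-b)$ to $\log\sigma(a-b)$. Your explicit algebraic verification of the scalar step, reducing it to $0\le 1+e^{a+b}$, is in fact cleaner than the paper's justification of that last line, which somewhat loosely attributes it to ``Jensen's inequality\ldots and linearity of expectation''.
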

\begin{proof}[Proof Sketch]
It is proved by using $1-\sigma(f)=\sigma(-f)$ and using Jensen's inequality since $\log\sigma$ is a concave function. See supplementary material for full proof.
\end{proof}

\begin{proof}[Proof of Theorem \ref{mebub}]
If $f$ lies in the RKHS, then there exists some $\mu^m_p$ and $\mu^m_q$ such that 
\begin{align}
    \label{finite_embed}
    \frac{1}{m}\sum_{x_i \sim p(x)} f(x_i)= \langle \mu^m_p, f \rangle_{\mathcal{H}_K} , \hspace{0.2cm}
   \frac{1}{m}\sum_{x_j \sim q(x)} f(x)= \langle \mu^m_q, f \rangle_{\mathcal{H}_K} 
\end{align}
Applying eq.(\ref{finite_embed}) to the Lemma \ref{upper_bound} for finite samples, we obtain required result.
\end{proof}
\textbf{Geometric Intuition:} Theorem \ref{mebub} tells us that the upper bound (MEBUB) to the objective is $\log \sigma$ of the inner product between $\mu^m_p-\mu^m_q$ and $f$. The inner product and KL divergence estimates have been depicted geometrically in Fig.~\ref{geometric}. When the objective is maximized, MEBUB may also increase which leads to an increase in the inner product since $\log \sigma$ is monotonic. When this happens, nothing {prevents} the midpoint of the mean embeddings, \textit{i.e.}, $\frac{\mu^m_p+\mu^m_q}{2}$, from going away from the origin in Fig.~\ref{geometric}. In the next section, we show how 
this affects kernel complexity $S_K$.
\begin{figure}[tbp]
\begin{center}
\centerline{\includegraphics[width=0.6\linewidth]{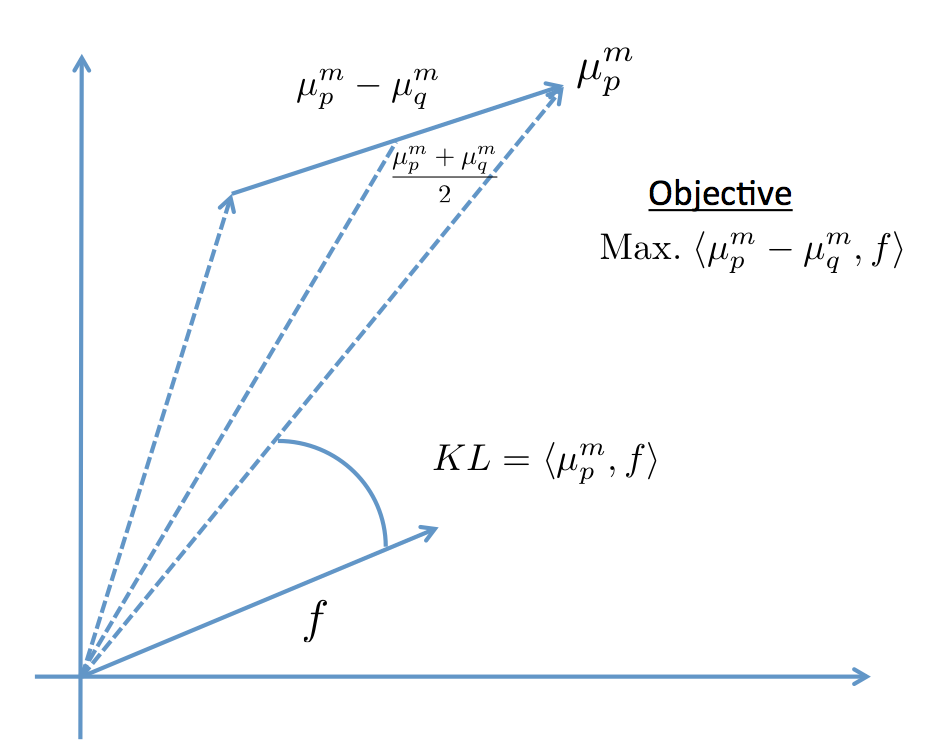}}
\caption{Geometrically representing mean embeddings of two distributions, and their relation to the maximization objective and KL divergence. 
\vspace{-.2cm}}
\label{geometric}
\end{center}
\end{figure}

\section{Fitting Pieces and Complexity Control}
\label{fitting_pieces}
Theorem \ref{complexity} shows that the error bound {of the KL estimate} is exponentially controlled by the kernel complexity $S_K(\theta)=\underset{x,t}{sup}\hspace{0.1cm} K_{\theta}(x,t)$. Since the mean of a vector is upper bounded by supremum,
\begin{align}
||\mu^m_p+\mu^m_q||_{\mathcal{H}_K}&=\sqrt{\frac{1}{m^2}\sum_{i,j} K_{\theta}(y_i,y_j)+2K_{\theta}(y_i,z_j)+K_{\theta}(z_i,z_j)}\\
\label{sup_avg}
&\leq 2\sqrt{\underset{x\in\{Y,Z\},t\in\{Y,Z\}}{sup} K_{\theta}(x,t)} = 2\sqrt{S_K(\theta)}
\end{align}
As the training progresses {in maximizing the objective in eq.(\ref{kl_h})}, the algorithm tries to do two things: 1) align $f$ with $\mu^m_p-\mu^m_q$ and 2) increase norms $||\mu^m_p-\mu^m_q||_{\mathcal{H}_K}$ and $||f||_{\mathcal{H}_K}$. For fixed $\langle\mu^m_p,\mu^m_q\rangle_{\mathcal{H}_K}/(||\mu^m_p||.||\mu^m_q||)$, we can show that the ratio $||(\mu^m_p-\mu^m_q)||_{\mathcal{H}_K}/||\mu^m_p+\mu^m_q||_{\mathcal{H}_K}$ also remains unchanged. Under this assumption, we could say that maximizing eq.(\ref{kl_h}) could lead to increment of ${||\mu^m_p+\mu^m_q||}_{\mathcal{H}_K}$, and nothing would stop the network from going this way. When this happens, the inequality in eq.(\ref{sup_avg}) suggests that $S_K(\theta)$ also increases, thereby {increasing the} probability of deviation-from-the-mean error {in the KL estimate} by Theorem \ref{complexity}. In other words, as we train the neural discriminator, the neural network parameters $\theta$ change such that the complexity of RKHS might itself keep increasing which causes exponential growth in the sample complexity.

\begin{figure}[tbp]
\begin{center}
\centerline{\includegraphics[width=\linewidth]{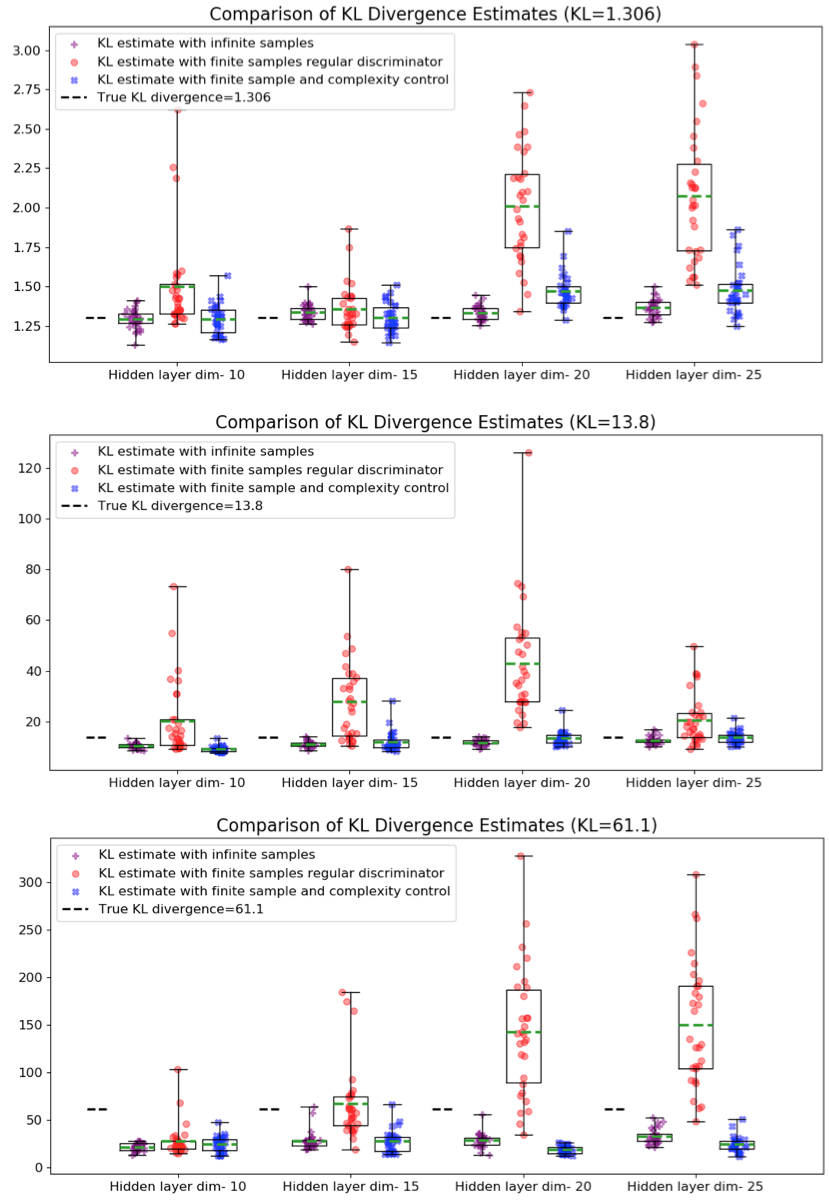}}
\caption{Comparison of KL divergence estimates using 
infinite samples (purple) versus finite samples with (blue) and without (red) complexity control. 
\vspace{-.2cm}}
\label{three_kl}
\end{center}
\vspace{-.5cm}
\end{figure}
To control the complexity of RKHS space, we can control $S_K(\theta)$ 
during the training of the neural network.
To do this in a scalable way compatible with neural networks, we use gradient descent based optimization. Computation of gradient of $S_K(\theta)$ w.r.t $\theta$ is straightforward using definition of $K_{\theta}(x,t)$ and can be easily realized by using backpropagation.  Ideally, $S_K(\theta)$ is $max.K_{\theta}(x,t)$ over all the data-pairs $(x,t) \in \mathcal{X}\times \mathcal{X}$, which requires passing all the datapoints through neural network.  
Instead, we simply compute supremum over the minibatch matrix which contains the $2b\times 2b$ entries corresponding to every pair in $2b$ elements ($b$ from each distribution $p(x)$ and $q(x)$). This is obviously a lowerbound -- denoted by $S_{mini}(\theta)$ -- of $S_K(\theta)$. To penalize the RKHS space that are high in complexity, we add a regularization term with parameter $\lambda$ to maximize a modified objective:
\begin{align}
\label{aug_obj}
   {\frac{1}{m}\sum_{x_i \sim p(x_i)}\log \sigma(f(x_i))+\frac{1}{m}\sum_{x_j \sim q(x_j)}\log (1-\sigma(f(x_j)))} - \lambda.S_{mini}^{\gamma}
\end{align}
where $\gamma$ is an estimation of $\frac{n}{h}$ and 
treated
as a hyperparameter. 
Optimization of eq.(\ref{aug_obj}) w.r.t. neural network parameters $\theta$ allows dynamic control of the complexity of the discriminator function on the fly in a scalable and efficient way.

\section{Experimental Results}
\textbf{Experimental Setup:}
We assume that we have finite sets of samples from two distributions. We further assume that we are required to apply minibatch based optimization.  We consider estimating KL divergence in a simple case of two Gaussian distributions in 2D, where we know the {analytical} KL divergence between the two distributions as the ground truth. We consider three different pairs of distributions corresponding to true KL divergence values of $1.3, 13.8$ and $61.1$, respectively and use $m=5000$ samples from each distribution to estimate KL in the finite case.

As the discriminator, we use a fully connected neural network with two hidden layers. The number of hidden units are varied to understand the effect of the discriminator complexity {on the fluctuation of the KL estimate}. The dimensions are kept identical between the neural-net discriminator and the RKHS discriminator, the latter being different only in that its last layer is stochastic. We perform random estimation experiment 30 times and report the mean, standard deviation, scatter and box plots

\textbf{Finite \textit{v.s.} Infinite Samples:}
In infinite samples experiment, we assume 
that we can continuously sample 
from the model generating data 
from the two given distributions. 
The results of KL estimates using infinite samples is shown in Fig.~\ref{three_kl} and Table.\ref{kl_table} left, 
in comparison with estimates using finite samples without controlling the complexity of the neural-net discriminator.
We observe that when we use infinite samples, we obtain an estimate with low variance and values close to the analytical truth in KL = $1.3$ and KL = $13.8$ and an underestimate when KL = $61.1$. In contrast, when we use finite samples without controlling the complexity of the neural-net discriminator, the estimates fluctuated heavily confirming our hypothesis:  
we need to control the complexity of the function when the number of samples is finite, or else the probability of estimation error increases.

\textbf{Complexity Control:}
Fig.~\ref{three_kl} and Table.\ref{kl_table} left compare the estimation of KL divergence with and without controlling the discriminator complexity. With discriminator complexity penalized in eq. (\ref{aug_obj}), the KL estimates are much more reliable (low variance) and 
closer to the estimates from infinite samples.
Note in Fig.\ref{three_kl} that, without complexity penalization, the erratic behavior of the KL estimator  worsens as the number of hidden layers increases in the  discriminator. This is consistent with our theory because increasing the number of hidden layers increases the complexity of the discriminator.
{This highlights the need of higher degree penalization of the discriminator complexity as a neural network with increased capacity is used to estimate higher values of KL divergence.}

\begin{table}[tb]
\caption{Left: Comparison of KL divergence estimates using different methods (hidden layer dim =25). Right: The effect of the regularization parameter $\boldsymbol{\lambda}$; hidden layer dimension = 20}
\begin{minipage}[c]{0.54\textwidth}
\centering
\begin{tabular}{ p{3em} p{1.5cm} p{1.5cm} p{1.35cm}}
\hline \hline
\multicolumn{1}{c}{\multirow{2}{4em}{\bf Method}}  &\multicolumn{3}{c}{\bf True KL} \\
\cline{2-4}
&1.3& 13.8 & 61.1\\
\hline 
MINE   & Unstable & Unstable & Unstable \\
VDM   & Unstable & Unstable & Unstable\\
\hline
Infinite sample & $1.4\pm 0.05$ & $12.6\pm 1.5$ & $32.4 \pm 7.9$\\
\hline
NN Disc & $2.1\pm 0.42$ & $20.6\pm 9.8$ & $150 \pm 65$\\
\hline
Complexity control & $1.5\pm 0.15$ & $13.6\pm 2.4$ & $24 \pm 8.2$\\
\hline 
\end{tabular}
\end{minipage}
\hspace{0.05cm}
\begin{minipage}[c]{0.45\textwidth}
\begin{center}
\begin{tabular}{ p{3em} p{1cm} p{1cm} p{1cm}}
\hline \hline
\multicolumn{1}{c}{\multirow{2}{4em}{$\boldsymbol{\lambda}$}}  &\multicolumn{3}{c}{\bf True KL} \\
\cline{2-4}
&1.3& 13.8 & 61.1\\
\hline 

5e-5 & $1.46\pm 0.22$ & $16.65\pm 10.4$ & $116.7 \pm 116$\\
\hline
1e-4 & $1.56\pm 0.25$ & $30.97\pm 10.5$ & $39.17 \pm 18.5$\\
\hline
5e-4 & $1.47\pm 0.11$ & $13.44\pm 2.68$ & $18.36 \pm 3.9$\\
\hline 
\end{tabular}
\end{center}
\end{minipage}
\label{kl_table}
\vspace{-.2cm}
\end{table}

\textbf{Effect of {Regularization} Parameter:} 
Table \ref{kl_table} right shows the effect of the regularization parameter $\lambda$ that 
tunes the level of complexity control in 
eq. (\ref{aug_obj}).
The fluctuation in estimates decreases as we increase the value of $\lambda$. We also repeated these experiments varying the latent dimension (see Fig.3 in appendix) and found that the pattern holds in all cases.
Furthermore, for a discriminator with low complexity (\textit{e.g.}, latent dimension = 10), a smaller value of $\lambda$ is sufficient to yield low-variance estimate. As the size of the hidden layer increases,
we need to penalize the complexity aggressively with a higher value of $\lambda$ in order to obtain the same level of consistency. This further supports our theory.

\textbf{Underestimation for High KL Divergence}
We observe in Fig.\ref{three_kl} and Table.~\ref{kl_table} that, for KL = $61.1$, results from both infinite samples and finite samples with complexity control give  underestimated KL divergences even though they reduce fluctuation significantly. This is not surprising since we were focusing on deviation-from-mean error. 
The total estimation error consists of two additional errors: discriminator induced error and the bias (see eq.\ref{total_error}). For the small KL divergence, simply controlling complexity was sufficient to minimize all the errors, but for higher value, it is no longer sufficient. The underestimation might be either because of the bias or error induced by incorrect discriminator function. High bias might be caused if we control the function space too much such that the optimum discriminator $f^*_{\mathcal{H}_K}$ is not close to true discriminator function, $f^*$ (see bias-variance trade-off \cite{cucker2002mathematical, bishop2006pattern}). It would be an interesting future direction to quantify all three error terms in eq.(\ref{total_error}).

\section{Conclusions \& Discussion}
We have shown that using a regular neural network as a discriminator in estimating KL divergence results in unreliable estimation if the complexity of the function space is not controlled.  
We then showed 
a solution by penalizing the kernel complexity in a scalable way using neural networks.

The idea of constructing a {neural-net} function in RKHS and complexity control could also be useful in 
stabilizing GANs, 
or potentially in improving generalization of neural networks. 
Several papers have identified issues with the stability of GANs \cite{mescheder2018gan,kodali2017convergence,thanh-tung2018improving}. One common understanding is that, in its raw form, we do not enforce the discriminator function to be smooth or regular around the neighborhood of its inputs. Currently, the most successful way to stabilize GANs is to enforce smoothness by gradient penalization. Even in variations like Wasserstein GAN \cite{arjovsky2017wasserstein,improved_wgan} and MMD GAN \cite{binkowski2018demystifying}, gradient penalty is crucial to achieve stable results. On the light of the present analysis, we believe that the gradient penalty can be thought as one way to control the complexity of the discriminator. The objective and nature of optimization is such that the complexity of discriminator is bound to increase and therefore some way of decreasing complexity is a must. Similarly, generalization of neural network classifiers and regressors could be improved with complexity control.
\bibliographystyle{plain}
\bibliography{bibli}

\newpage
\appendix
\section{Bounding the Error Probability of KL Estimates}
To obtain the probability of deviation-from-mean error, we first bound the error probability in terms of the covering number in Lemma \ref{sample_complexity}. Then, we use an estimate of the covering number of RKHS due to \cite{cucker2002mathematical} to obtain a bound of error probability in terms of the kernel $K_{\theta}$ in Theorem \ref{complexity}.
\begin{customlemma}{1}
Let $f^m_{\mathcal{H}_K}$ be the optimal discriminator function in a RKHS $\mathcal{H}_{K}$ which is M-bounded. Let ${KL}_m(f^m_{\mathcal{H}_K})=\frac{1}{m}\sum_i f^m_{\mathcal{H}_K}(x_i)$ and $KL(f^m_{\mathcal{H}_K}) = E_{ p(x)}[f^m_{\mathcal{H}_K}(x)]$ be the estimate of KL divergence from m samples and that by using true distribution $p(x)$ respectively.
Then the probability of error at some accuracy level, $\epsilon$ is lower-bounded as:
\begin{align}
\nonumber
    \text{Prob.}(&|{KL}_m(f^m_{\mathcal{H}_K})-{KL}(f^m_{\mathcal{H}_K})|\leq \epsilon) 
    \geq 1-2\mathcal{N}(\mathcal{H}_K, \frac{\epsilon}{4\sqrt{S_K}})\exp(-\frac{m\epsilon^2}{4M^2})
\end{align}
where $\mathcal{N}(\mathcal{H}_K,\eta)$ denotes the covering number of a RKHS space $\mathcal{H}_K$ with disks of radius $\eta$, and $S_K=\underset{x,t}{sup} $\hspace{0.1cm} ${K(x,t)}$ which we refer as kernel complexity
\end{customlemma}
\begin{proof}
Let $\ell_z(f)=E_{p(x)}[f(x)]-\frac{1}{m}\sum_i f(x_i)$ denotes the error in the estimate such that we want to bound $|\ell_z(f)|$. We have,
\begin{align*}
    &\ell_z(f_1)-\ell_z(f_2)
    = E_{p(x)}[f_1(x)- f_2(x)]-\frac{1}{m}\sum_i f_1(x_i) -  f_2(x_i)
\end{align*}
We know $E_{p(x)}[f_1(x)- f_2(x)]\leq ||f_1-f_2||_\infty$ and $\frac{1}{m}\sum_i f_1(x_i) -  f_2(x_i) \leq ||f_1-f_2||_\infty$.
Using the triangle inequality, we obtain
$
| \ell_z(f_1)-\ell_z(f_2) | \leq 2||f_1-f_2||_\infty
$. Now, consider $f\in \mathcal{H}_K$, then,
\begin{align}
    |f(x)|=|\langle K_x, f \rangle| \leq ||f||||K_x||=||f||\sqrt{K(x,x)}
\end{align}
This implies the RKHS space norm and $\ell_\infty$ norm of a function are related by 
\begin{align}
\label{sup_rkhs}
    ||f||_\infty \leq \sqrt{S_K}||f||_{\mathcal{H}_K}
\end{align}
Hence, we have: 
\begin{align}
\label{lipschitz}
    | \ell_z(f_1)-\ell_z(f_2) | \leq 2\sqrt{S_K}||f_1-f_2||_{\mathcal{H}_K}
\end{align}
The idea of the covering number is to cover the whole RKHS space $\mathcal{H}_K$ with disks of some fixed radius $\eta$, which helps us bound the error probability in terms of the number of such disks. Let $\mathcal{N}(\mathcal{H}_K,\eta)$ be such disks covering the whole RKHS space. Then, for any function $f$ in $\mathcal{H}_K$, we can find some disk, $D_j$ with centre $f_j$, such that $|| f-f_j ||_{\mathcal{H}_K} \leq \eta$. If we choose $\eta= \frac{\epsilon}{2\sqrt{S_K}}$, then from eq.(\ref{lipschitz}), we obtain,
\begin{align}
    \label{2eps}
    \underset{f\in D_j}{sup}{| \ell_z(f)| \geq 2\epsilon} \implies  | \ell_z(f_j)| \geq \epsilon
\end{align}
Using the Hoeffding's inequality,\hspace{0.1cm}
$
    \text{Prob.}(|\ell_z(f_j)|\geq \epsilon )\leq 2e^{-\frac{m\epsilon^2}{2M^2}}
$
and eq.(\ref{2eps}),
\begin{align}
    &\text{Prob.}(\underset{f\in D_j}{sup}{| \ell_z(f)| \geq 2\epsilon} )\leq 2e^{-\frac{m\epsilon^2}{2M^2}}
\end{align}

Applying union bound over all the disks, we obtian,
\begin{align}
    &\text{Prob.}(\underset{f\in \mathcal{H}}{sup}{| \ell_z(f)| \geq 2\epsilon} )\leq 2\mathcal{N}(\mathcal{H},\frac{\epsilon}{2\sqrt{S_K}})e^{-\frac{m\epsilon^2}{2M^2}}\\
    \nonumber
    &\text{Prob.}(\underset{f\in \mathcal{H}}{sup}{| \ell_z(f)| \leq \epsilon} )\geq 1- 2\mathcal{N}(\mathcal{H},\frac{\epsilon}{4\sqrt{S_K}})e^{-\frac{m\epsilon^2}{8M^2}}
\end{align}
which proves the lemma. \\

\underline{On M-boundedness of $f^m_{\mathcal{H}_K}$}\\
To prove the lemma, we assumed that $f^m_{\mathcal{H}_K}$ is M bounded. To see why this is reasonable, from eq.\ref{sup_rkhs}, we have $||f^m_{\mathcal{H}_K}||_\infty \leq \sqrt{S_K}||f^m_{\mathcal{H}_K}||_{\mathcal{H}_K}$. Since by construction, $||f^m_{\mathcal{H}_K}||_{\mathcal{H}_K} \leq 1$, $f^m_{\mathcal{H}_K}$ is bounded if $S_K$ is bounded, which is true by assumption and seems to hold true in experiments.
\end{proof}
\begin{remark}
We derived the error bound based on the Hoeffding's inequality by assuming that our only knowledge about $f$ is that it is bounded. If we have other knowledge, for example, if we know the variance of $f$, we could use Bernstein's inequality instead of Hoeffding's inequality with minimal change to the proof. To the extent we are interested in the contribution of neural network in error bound, however, there is not much gain by using one inequality or the other. Hence, we stick with Hoeffding's inequality and note other possibilities.
\end{remark} 
\begin{remark}
Note that in Lemma 1, the radius of disks are inversely related to the the quantity, $S_K$, meaning that if $S_K$ is high, we would need large number of disks to fill the RKHS space. Hence, it denotes a quantity that reflects the complexity of the RKHS space. We, therefore, term it kernel complexity. Also in eq. \ref{sup_rkhs} and the discussion about the M-boundedness, we see that the maximum value $|f(x)|$ depends on $S_K$, again providing insight into how $S_K$ may control both maximum fluctuation and the boundedness.

\end{remark}
Lemma \ref{sample_complexity} bounds the probability of error in terms of the covering number of the RKHS space. Next, we use Lemma 2 due to \cite{cucker2002mathematical} 
to obtain an error bound in estimating KL divergence with finite samples in Theorem \ref{complexity}.

\begin{customlemma}{2}[\cite{cucker2002mathematical}]
Let $K: \mathcal{X}\times \mathcal{X}\to {\rm I\!R} $ is a $\mathcal{C}^\infty$ Mercer kernel and the inclusion $I_K:\mathcal{H}_K\xhookrightarrow{}\mathcal{C}(\mathcal{X})$ is the compact embedding defined by $K$ to the Banach space $\mathcal{C}(\mathcal{X})$ . Let $B_R$ be the ball of radius $R$ in RKHS $\mathcal{H}_{K}$. Then $\forall \eta>0, R >0, h>n $, we have
\begin{align}
    \ln \mathcal{N}(I_K(B_R), \eta) \leq \left( \frac{RC_h}{\eta} \right)^{\frac{2n}{h}}
\end{align}
where $\mathcal{N}$ gives the covering number of the space $I_K(B_R)$ with disks of radius $\eta$, and $n$ represents the dimension of inputs space $\mathcal{X}$. $C_h$ is given by 
\begin{align}
C_h=C_s\sqrt{||L_s||}
\end{align}
where $L_s$ is a linear embedding from square integrable space $\mathcal{L}_2(d\rho)$ to the Sobolev space $H^{h/2}$ and $C_s$ is a constant.
\end{customlemma}
To prove Lemma \ref{covering number}, the RKHS space is embedded in the Sobolev Space $H^{h/2}$ using $L_s$ and then covering number of Sobolev space is used. Thus the norm of $L_s$ and the degree of Sobolev space, $h/2$, appears in the covering number of a ball in $\mathcal{H}_K$. 
In Theorem \ref{complexity}, we use this Lemma to bound the {estimation error of KL divergence}. 

\begin{customthm}{2}
Let ${KL}(f^m_{\mathcal{H}_K})$ and ${KL}_m(f^m_{\mathcal{H}_K})$ be the estimates of KL divergence obtained by using true distribution $p(x)$ and $m$ samples respectively as described in Lemma \ref{sample_complexity}, then the probability of error in the estimation at the error level $\epsilon$ is given by:
\begin{align*}
   \text{Prob.}(&|{KL}_m(f^m_{\mathcal{H}_K})-{KL}(f^m_{\mathcal{H}_K})|\leq \epsilon) \geq 1-2\exp\Bigg[\left( \frac{4RC_s\sqrt{S_K||L_s||}}{\epsilon} \right)^{\frac{2n}{h}}-\frac{m\epsilon^2}{4M^2}\Bigg]
\end{align*}
\end{customthm}
\begin{proof}
Lemma \ref{covering number} gives the covering number of a ball of radius $R$ in a RKHS space. If we consider the hypothesis space to be a ball of radius $R$ in Lemma \ref{sample_complexity} , we can apply Lemma \ref{covering number} in it. Additionally, since we fix the radius of disks to be $\eta=\frac{\epsilon}{4\sqrt{S_K}}$ in Lemma \ref{sample_complexity}, we obtain,
\begin{align}
\nonumber
    &\text{Prob.}(|{KL}_m(f^m_{\mathcal{H}_K})-{KL}(f^m_{\mathcal{H}_K})|\leq \epsilon) \geq 1-2\exp\Big[\left( \frac{4\sqrt{S_K}RC_h}{\epsilon} \right)^{\frac{2n}{h}}-\frac{m\epsilon^2}{4M^2}\Big]\\
\end{align}
Substituting $C_h=C_s\sqrt{||L_s||}$ gives the required result.
\end{proof}

\section{Mean Embedding Upper Bound}

Using mean embedding representation of functions in RKHS space, we derive some geometrical insights into the maximization objective.
Theorem \ref{mebub} first gives an upper bound to the optimization objective, which depends on Lemma 3 as proved next.


\begin{customlemma}{3}
$
\nonumber
{E_{p(x)}\log \sigma(f(x))+E_{q(y)}\log (1-\sigma(f(y)))}
\leq \log \sigma[E_{p(x)}(f(x))-E_{q(y)}(f(y))]
$
\end{customlemma}
\begin{proof}
\begin{align*}
{E_{p(x)}\log \sigma(f(x))+E_{q(x)}\log (1-\sigma(f(x)))}
&={E_{p(x)}\log \sigma(f(x))+E_{q(x)}\log (\frac{1}{1+\exp(f)})}\\
&={E_{p(x)}\log \sigma(f(x))+E_{q(x)}\log (\frac{\exp(-f)}{1+\exp(-f)})}\\
&={E_{p(x)}\log \sigma(f(x))+E_{q(x)}\log \sigma(-f(x))}\\
&\leq\log\sigma[{E_{p(x)} (f(x))]+\log \sigma [E_{q(x)}(-f(x))}]\\
&\leq\log\sigma[{E_{p(x)} (f(x))-E_{q(x)}(f(x))}
\end{align*}
where we used the fact that $\log \sigma $ is a concave function and applied Jensen's inequality in last two lines and linearity of expectation in the last line.
\end{proof}


\section{Algorithm}
Algorithm 1 details the algorithm to estimate KL divergence with complexity control.
\begin{algorithm}[h]
\caption{KL divergence estimation with complexity control}
\begin{algorithmic}[1]
\STATE Fix minibatch size, $b$, hyperparameter $\gamma$, number of samples $m$, $flat\_n=100$, $idx=0$,$\ell_{min}=\infty$ 
\STATE Initialize the neural network parameters ${\theta}$, last layer $w \sim \mathcal{N}(\bar{w},LL^T)$, such that $\bar{w}=0,LL^T=\text{I}$
\FOR{iteration $iter$ in 1 to $iter_{max}$}
    \STATE $kl_{sum}=0, \ell_{adv}=0$, 
    $n\_batch=(m/b)$
    \FOR{iteration k in 1 to $n\_batch$}
        \STATE Sample minibatch $\{x_i\}_{i=1}^b$ from $p(x)$ and $\{y_i\}_{i=1}^b$ from $q(x)$, and $J=\{\{x_i\}_{i=1}^b, \{y_i\}_{i=1}^b\}$
        \STATE For each $x_i, y_i$, sample $\epsilon\sim \mathcal{N}(0,I)$ and obtain samples $\{w_j\}_{j=1}^d$ where $w_j=\bar{w}+L\epsilon_j$  
        \STATE $f(x)_i, =\frac{1}{d}\sum_j\phi_{\theta}(x_i)^Tw_j$\\
        $f(y)_i=\frac{1}{d}\sum_j\phi_{\theta}(y_i)^Tw_j$
        \STATE $loss_{d}=-\frac{1}{b}\sum_i\log\sigma(f(x)_i)+\log(1-f(y)_i)$
        \STATE $S_{mini}=\underset{x \in J, t \in J}{max} \phi_{\theta}(x)^T(\bar{w}\bar{w}^T+\Sigma)\phi_{\theta}(t) $
        \STATE Backpropagate $loss=loss_{d}+\lambda.S_{mini}^{\gamma}$ and update $\theta,\bar{w},L$
        \STATE $kl_{sum}=kl_{sum}+\frac{1}{b}\sum_i\log\sigma(f(x)_i)$\\
        $\ell_{adv}=\ell_{adv}+loss_{d}$
        
    \ENDFOR
    \STATE $\ell=\ell_{adv}/n\_{batch}$, $kl_{iter}=kl_{sum}/n\_batch$
    \IF{$\ell <\ell_{min}$}
        \STATE $kl=kl_{iter}$, $idx=iter$
    \ELSIF{$iter >idx+flat\_n$}
        \RETURN $kl$
    \ENDIF
\ENDFOR
\RETURN $kl$
\end{algorithmic}
\end{algorithm}

\section{Experimental Results}



\subsection{Details of experimental setup}

\textbf{Neural RKHS discriminator architecture} (Proposed method)\\
\texttt{
Fully connected \\
Leaky ReLU\\
Fully connected\\
Leaky ReLU
}

For the RKHS discriminator, this gives $\phi_{\theta}(x)$ for the input data $x$. Then, $f(x)$ needs to defined as in line 8 of Algorithm 1. Similarly, total loss with complexity penalization is computed as line 11 in Algorithm 1.

\textbf{Neural network discriminator architecture}\\
\texttt{
Fully connected \\
Leaky ReLU\\
Fully connected\\
Leaky ReLU\\
Fully connected
}

For the Neural net discriminator, this would directly give $f(x)$ for the input data $x$. Also, loss would be defined by line 9, no penalization as in line 11 of Algorithm 1.

\textbf{Learning rate:} $5\times 10^{-3}$\\
$\boldsymbol{\gamma:} 0.05$ \\
\textbf{No. of samples from each distribution:} $5000$\\
\textbf{Minibatch size:} 50\\
\textbf{Hyperparameter selection:} The hyperparameters like learning rate and $\gamma$
 were selected by first estimating KL divergence at a mid value like $13$. Then, same value was used in all experiments.



\begin{figure}[t]
\begin{center}
\centerline{\includegraphics[width=\linewidth]{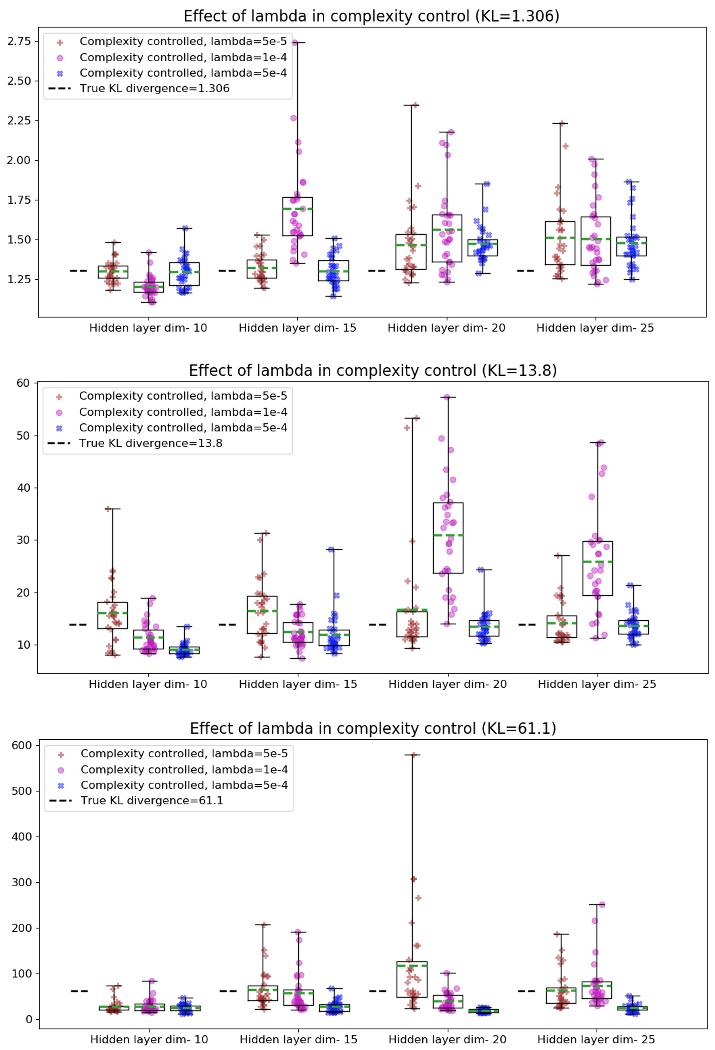}}
\caption{The effect of the regularization parameter $\lambda$ in KL estimates (y-axis) plotted against the varying hidden layer dimension for each KL divergence value.}
\label{betakl}
\end{center}
\vspace{-6mm}
\end{figure}


\subsection{Effect of {Regularization} Parameter $\lambda$}
Fig.~\ref{betakl} shows the effect of the regularization parameter $\lambda$ that 
tunes the level of complexity control in estimating the KL divergence.
As expected, in all cases, we observe that the fluctuation in estimates decreases as we increase the value of $\lambda$.
\end{document}